\theoremstyle{plain}
\newtheorem{theorem}{Theorem}
\newtheorem{definition}{Definition}
\newtheorem{problem}{Problem}
\newtheorem{proposition}{Proposition}
\newcommand{\levent}{\lozenge}
\newcommand{\lalways}{\square}
\newcommand{\rsep}{\mathrel{{.}{.}}\nobreak}
\newcommand{\range}[2]{[{#1 {\rsep} #2}]}
\newcommand{\Dataset}{{\mathcal{S}}}
\newcommand{\Classes}{{\mathcal{C}}}
\providecommand{\norm}[1]{{\lVert#1\rVert}}
\newcommand{\pa}[1]{{\mathrm{pa}(#1)}}
\newcommand{\ch}[1]{{\mathrm{ch}(#1)}}
\let\NAT@parse\undefined
\title{\LARGE \bf
Learning Optimal Signal Temporal Logic Decision Trees for Classification:
A Max-Flow MILP Formulation
}
\author{Kaier Liang, Gustavo A. Cardona, Disha Kamale, and Cristian-Ioan Vasile
\thanks{Kaier Liang, Gustavo A. Cardona,  Disha Kamale, and Cristian-Ioan Vasile are with the Mechanical Engineering and Mechanics Department at Lehigh University, PA, USA: {\tt\small \{kal221, gcardona, ddk320, cvasile\}@lehigh.edu}}        
}   
\begin{document}

\maketitle
\thispagestyle{empty}
\pagestyle{empty}

\begin{abstract}
This paper presents a novel framework for inferring timed temporal logic properties from data. 
The dataset comprises pairs of finite-time system traces and corresponding labels, denoting whether the traces demonstrate specific desired behaviors, e.g. whether
the ship follows a safe route or not. 
Our proposed approach leverages decision-tree-based methods to infer Signal Temporal Logic classifiers using primitive formulae.
We formulate the inference process as a mixed integer linear programming optimization problem, recursively generating constraints to determine both data classification and tree structure. 
Applying a max-flow algorithm on the resultant tree transforms the problem into a global optimization challenge, leading to improved classification rates compared to prior methodologies.
Moreover, we introduce a technique to reduce the number of constraints by exploiting the symmetry inherent in STL primitives, which enhances the algorithm's time performance and interpretability.
To assess our algorithm's effectiveness and classification performance, we conduct three case studies involving two-class, multi-class, and complex formula classification scenarios. 
\end{abstract}

\section{Introduction}
\label{sec:intro}
The ever-increasing complexity of modern systems has resulted in an urgent need for sophisticated techniques that can help understand and classify temporal behaviors from time-series data. 
Machine learning (ML)~\cite{kotsiantis2007supervised, soofi2017classification} approaches have emerged as effective tools for this task, particularly in two-class classification, where the objective is to differentiate between desired and undesired system behaviors. 
However, the opaque nature of traditional ML algorithms often hinders interpretability and insight into system dynamics~\cite{krishnan2020against, gilpin2018explaining}.

To address this limitation, formal methods such as Signal Temporal Logic (STL)~\cite{maler2004, donze2013signal} have been gaining traction as a means of specifying temporal properties of real-valued signals. 
STL provides a structured language for expressing complex temporal and logical behaviors, offering both readability and interoperability. 
By formulating temporal properties as logical formulae, STL facilitates the inference of system behaviors from labeled time-series data~\cite{mohammadinejad2020interpretable}.

While early methods in this domain focused on manual parameter synthesis from predefined template formulae~\cite{asarin2012parametric, jin2013mining, hoxha2018mining, bakhirkin2018efficient}, recent advancements have sought to automate this process by inferring both the structure and parameters of STL formulae directly from data~\cite{kong2016temporal}. 
The authors of~\cite{jones2014anomaly} introduced a fragment of STL called inference parametric signal temporal logic (iPSTL), which enables the classification problem to be formulated as an optimization problem. 
However, this approach faces challenges such as high computational cost due to nonlinear parameter optimization routines and constructing a directed acyclic graph (DAG) based on the ordering of PSTL formulae, which may not necessarily improve classification performance.
Other recent works, such as ~\cite{bartocci2014data, bufo2014temporal}, have addressed the two-class classification problem by building generative models for each class and deriving a discriminative formula that maximizes the probability of satisfaction for one model while minimizing it for the other. 
Despite the potential of these methods, they require building models of the system under analysis, which entails domain expertise and substantial amounts of data.
On the other hand, decision tree-based frameworks have emerged as promising approaches for efficiently learning STL formulae. 
These frameworks construct trees where each node encapsulates simple formulae optimized from a predefined set of primitives~\cite{aasi2022classification, bombara2016decision}. 
Lastly, the authors in ~\cite{li2023learning} use the different layers of neural networks to learn different aspects of STL formulae that allow a compact and efficient binary interpretable classification for level-one STL formulae, and multi-class inference~\cite{linard2022inference,li2024multi}.

Despite recent advances, the formulae generated by existing algorithms often suffer from verbosity and complexity, limiting their interpretability and practicality in real-world scenarios. 
In response to these challenges, this paper introduces an innovative approach: optimal max-flow tree data classification using Signal Temporal Logic (STL) inference.
Our method builds upon the concept of tree-based structures, similar to previous works such as ~\cite{aasi2022classification, bombara2016decision}, to encode STL primitives recursively and facilitate the classification of labeled datasets. 
However, unlike previous approaches, we leverage this tree structure to formulate a max-flow optimization problem~\cite{aghaei2021strong}, enabling us to simultaneously determine the tree structure and the inferred STL specification.
At each node of the tree, we encapsulate STL primitives and make branching decisions based on the classification of the input data. 
By formulating the classification task as a Mixed Integer Linear Programming (MILP) problem, our approach offers several advantages over previous methods. 
Unlike~\cite{aasi2022classification, bombara2016decision}, our approach enables global optimization and can be adapted for multi-class classification scenarios. 
Moreover, we reduce primitive redundancy, resulting in fewer constraints, which improves time performance and reduces complexity.
In contrast to~\cite{li2023learning}, our method considers a broader set of STL primitives and avoids the pitfalls associated with nonlinear optimization, such as getting trapped in local minima. 
By simultaneously maximizing classification accuracy and minimizing formula complexity, our approach ensures optimal performance with enhanced interpretability.
We validate the efficacy of our method through multiple case studies, including two-class and multi-class classification scenarios. 
The results demonstrate significantly improved classification performance and interpretability compared to existing approaches, thereby paving the way for more effective temporal behavior classification and analysis.

The main contributions of this work are, 
\begin{enumerate}
    \item Proposing a novel decision tree-based STL inference algorithm that runs a global optimization and results in an improved classification rate as compared to the existing related approaches.
    \item Formulating a Mixed Integer Linear Programming encoding that classifies data and generates the rules for growing the tree structure based on STL primitives followed by a global max-flow optimization approach that guarantees high-performance classification.
    \item Generating a reduction of constraints based on the symmetry of STL primitives, resulting in more efficient and improved time performance.
    \item Demonstrating three case studies to show the algorithm's capability to handle two-class, multi-class, and complex formulae classification and compare its performance with other approaches.
\end{enumerate}



\section{Preliminaries and Notation}
\label{sec:preliminaries}
Let $\mathbb{R}$ denote the set of all real numbers, $\mathbb{Z}$ the set of integers, $\mathbb{B}$ the binary set, and $\mathbb{Z}_{\geq 0}$ the set of non-negative integers. 
For a set $\mathcal{S}$, $2^\mathcal{S}$ and $|\mathcal{S}|$ represent its power set and cardinality.
We have $\alpha+S = \{\alpha+x\mid x\in S\}$.
The integer interval (range) from $a$ to $b$ is $\range{a}{b}$. We use $\underline{I}=a$ and $\bar{I}=b$.
A discrete-time signal $s$, with time horizon $H \in \mathbb{Z}_{\geq 0}$, is defined as a function $s: \range{0}{H} \to \mathbb{R}^d$ mapping each time-step to an $d$-dimensional vector of real values.
The $j$-th component of $x$ is given by $x_j$, $j\in \range{1}{d}$.

\subsection{Signal Temporal Logic}
\label{sec:stl}
Consider a discrete-time signal $s: \range{0}{H} \to \mathbb{M}$ with values in the compact space $\mathbb{M} \subseteq \mathbb{R}^d$. 
Introduced in \cite{maler2004}, Signal Temporal Logic (STL) is a specification language that expresses real-time properties. Its syntax is defined as follows
\begin{equation}
\label{eq:stl-syntax}
    \phi ::= \top \mid h(s) \geq \pi \mid \lnot \phi \mid \phi_1 \land \phi_2 \mid \phi_1 \lor \phi_2 \mid  \levent_{I} \phi \mid \lalways_{I} \phi,
\end{equation}
where $\phi$, $\phi_1$, and $\phi_2$ are STL formulae,
$\top$ is the logical \emph{True} value,
$h(s) \geq \pi$ is a predicate with $h: \mathbb{R}^d \to \mathbb{R}$ and threshold value $\pi \in \mathbb{R}$,
$\lnot$, $\land$, and $\lor$ are the Boolean negation, conjunction, and disjunction operators.
Discrete-time temporal operators \emph{eventually} $\levent_I$, and \emph{always} $\lalways_I$ with $I = \range{\underline{I}}{\bar{I}}$
a discrete-time interval, $\bar{I} \geq \underline{I} \geq 0$,
are defined in the usual way~\cite{maler2004}.
Linear predicates of the form $s \sim \pi$, $\sim\, \in \{>, \leq, <\}$, follow via
negation and sign change. The logical \emph{False} value is $\bot = \lnot \top$.

The (qualitative) semantics of STL formulae over signals $s$ at time $k$ is recursively defined in~\cite{maler2004} as
\begin{equation}
\label{eq:stl-semantics}
  \begin{aligned}
    (s, k) \models (h(s) \geq \pi) \equiv{} & h(s(k)) \geq \pi,\\
    (s, k) \models \lnot \phi \equiv{} & (s, k) \nmodels \phi,\\
    (s, k) \models \phi_1 \land \phi_2 \equiv{} & \big( (s, k)\models \phi_1 \big) \land \big( (s, k)\models \phi_2 \big),\\
    (s, k) \models \phi_1 \lor \phi_2 \equiv{} & \big( (s, k)\models \phi_1 \big) \lor \big( (s, k)\models \phi_2 \big),\\
    (s, k) \models \levent_I \phi \equiv{} & \exists k' \in k + I \textrm{ s.t. } (s, k') \models \phi, \\
    (s, k) \models \lalways_I \phi \equiv{} & \forall k' \in k + I \textrm{ s.t. } (s, k') \models \phi,
  \end{aligned}  
\end{equation}
where $\models$ and $\nmodels$ denote satisfaction and violation, respectively.
A signal $s$ satisfying $\phi$, denoted as $s\models \phi$, is true if $(s, 0) \models \phi$.

In addition to Boolean semantics, STL admits quantitative semantics,
called \emph{robustness}, that indicates how much a signal satisfies
or violates a specification~\cite{fainekos2009robustness,donze2010}.
The robustness score $\rho(s, \phi, k)$ is recursively defined as
\begin{equation}
\label{eq:stl-robustness}
\begin{aligned}
    \rho(s, \top, k) & {} = \rho_\top,\\
    \rho(s, h(s) \geq \pi, k) & {}= h(s(k)) - \pi,\\
    \rho(s, \lnot \phi, k) & {} = -\rho(s, \phi, k),\\
    \rho(s, \phi_1 \land \phi_2, k) & {}= \min(\rho(s, \phi_1, k), \rho(s, \phi_2, k)),\\
    \rho(s, \phi_1 \lor \phi_2, k) & {}= \max(\rho(s, \phi_1, k), \rho(s, \phi_2, k)),\\
    \rho(s,\lalways_I \phi, k) & {}= \min_{k' \in k+I} \rho(s, \phi, k'),\\
    \rho(s,\levent_I \phi, k) & {}= \max_{k' \in k+I} \rho(s, \phi, k'),\\
\end{aligned}
\end{equation}
where $\rho_\top = \sup_{s, \pi}\{|h(s) - \pi|\}$ is the maximum robustness.

\begin{theorem}[Soundness~\cite{donze2010}]
\label{thm:stl-robustness-soundness}
Let $s$ be a signal and $\phi$ an STL formula.
It holds $\rho(s, \phi, k) > 0 \Rightarrow (s, k) \models \phi$ for satisfaction
and $\rho(s, \phi, k) < 0 \Rightarrow (s, k) \nmodels \phi$ for violation.
\end{theorem}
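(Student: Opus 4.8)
The plan is to prove both implications by structural induction on the STL formula $\phi$, mirroring the recursive definitions of the Boolean semantics in~\eqref{eq:stl-semantics} and of the robustness in~\eqref{eq:stl-robustness}. I would state a single inductive hypothesis covering both the satisfaction and the violation direction simultaneously, since the negation case swaps the two, so they must be carried together. The base cases are $\phi = \top$, where $\rho(s,\top,k) = \rho_\top = \sup\{|h(s)-\pi|\} \geq 0$ and one must observe $\rho_\top > 0$ (assuming a nontrivial predicate range) while $(s,k)\models\top$ always holds, so the violation side is vacuous; and $\phi = (h(s)\geq\pi)$, where $\rho(s,\phi,k) = h(s(k))-\pi$, so $\rho > 0 \Rightarrow h(s(k)) > \pi \Rightarrow h(s(k)) \geq \pi \equiv (s,k)\models\phi$, and $\rho < 0 \Rightarrow h(s(k)) < \pi \Rightarrow (s,k)\nmodels\phi$.

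For the inductive step I would handle each operator in turn. Negation: $\rho(s,\lnot\phi,k) = -\rho(s,\phi,k) > 0$ means $\rho(s,\phi,k) < 0$, which by the induction hypothesis (violation direction) gives $(s,k)\nmodels\phi$, i.e. $(s,k)\models\lnot\phi$; the $<0$ case is symmetric using the satisfaction direction. Conjunction: if $\rho(s,\phi_1\land\phi_2,k) = \min(\rho(s,\phi_1,k),\rho(s,\phi_2,k)) > 0$ then both arguments are positive, so by induction both conjuncts are satisfied, hence the conjunction is; if the min is $<0$ then at least one argument is negative, so by induction at least one conjunct is violated, hence the conjunction is violated. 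Disjunction is the dual, swapping $\min$ with $\max$ and the roles of the two directions. The temporal operators are handled the same way since $\lalways_I$ and $\levent_I$ are, semantically, generalized finite conjunctions and disjunctions over the finite index set $k+I$: for $\lalways_I$, $\rho(s,\lalways_I\phi,k) = \min_{k'\in k+I}\rho(s,\phi,k') > 0$ forces $\rho(s,\phi,k') > 0$ for every $k'\in k+I$, so by induction $(s,k')\models\phi$ for all such $k'$, which is exactly $(s,k)\models\lalways_I\phi$; and if the min is $<0$ then it is achieved (the index set is finite and nonempty) at some $k'$ with $\rho(s,\phi,k') < 0$, giving a witness of violation. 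The $\levent_I$ case is dual.

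The only genuinely delicate point — and the step I would flag as the main obstacle — is the strictness of the inequalities at the base predicate case: the theorem deliberately claims nothing when $\rho = 0$, and this is exactly why $h(s(k)) - \pi > 0$ must be weakened to $h(s(k)) - \pi \geq 0$ to match the non-strict semantics $h(s(k)) \geq \pi$, and conversely why the violation direction needs the strict $< 0$. One must be careful that the $\min$/$\max$ operations preserve strict sign information in the right direction — a strictly positive minimum implies all terms strictly positive, but a non-strictly positive minimum would not, which is why the induction goes through cleanly only for the strict hypotheses. Everything else is a routine case-by-case verification, and the argument terminates because the interval $I = \range{\underline{I}}{\bar{I}}$ is a finite integer range, so the quantifiers $\exists k'\in k+I$ and $\forall k'\in k+I$ range over finitely many points and the $\min$/$\max$ in~\eqref{eq:stl-robustness} are well-defined and attained.
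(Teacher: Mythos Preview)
Your structural-induction argument is correct and is indeed the standard proof of this soundness result. However, the paper does not actually give a proof of this theorem at all: it is stated as a cited result from~\cite{donze2010} and used as background in the preliminaries, with no accompanying argument. So there is nothing in the paper to compare your proposal against beyond noting that the authors defer to the literature, whereas you have supplied the full inductive verification. Your careful remarks on why both directions must be carried together (for the negation case) and on the role of strict versus non-strict inequalities at the predicate base case are exactly the points one needs to get right, and the observation that the finite discrete intervals $k+I$ guarantee the $\min$/$\max$ are attained is what makes the temporal cases go through cleanly.
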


The time horizon of an STL formula~\cite{Dokhanchi2014} is defined as
\begin{equation*}\small
\norm{\phi} =
\begin{cases}
0, & \mbox{if } \phi \in \{\top, h(s) \geq \pi \}, \\
\norm{\phi_1}, & \mbox{if } \phi = \lnot \phi_1,\\
\max\{\norm{\phi_1}, \norm{\phi_2}\}, & \mbox{if } \phi \in \{\phi_1 \land \phi_2, \phi_1 \lor \phi_2 \},\\
\bar{I} + \norm{\phi_1}, & \mbox{if } \phi \in \{\levent_I \phi_1, \lalways_{I} \phi_1 \}.
\end{cases}
\end{equation*}

\section{Problem Formulation}
\label{sec: problem}
This section introduces the data classification problem by inferring STL specifications.
Let us consider a labeled dataset $\Dataset:= \{(s^i, \ell^i)\}_{i \in \mathcal{I}}$, where $s^i$ and $\ell^i$ represent the $i$-th signal and its corresponding label. 
We denote the set of all possible classification classes as $\Classes=\range{1}{|\Classes|}$, where $\ell^i=c$ signifies that the $i$-th sample is labeled with class $c \in \Classes$. 
Our aim is to infer STL formulae $\phi_c$ that encapsulate properties inherent to each data class $c \in \Classes$. 
The formulae $\phi_c$ enable the classification of signals not present in the dataset $\Dataset$.
Formally, we consider the following multi-class classification problem.
 %
%
\begin{problem}
\label{pb: p1}
Given a labeled data set $\Dataset$, find a set of mutually-exclusive STL formulae $\Phi = \{\phi_c\}_{c\in \Classes}$ that maximizes the correct classification rate $CCR(\Phi)$ where 
\begin{equation}
\label{eq:CCR}
CCR(\Phi)\colon=\frac{\sum_{c \in \Classes}\left|\{s^i \mid s^i \models \phi_c \wedge \ell^i=c\}\right|}{|\mathcal{I}|}.
\end{equation}
such that $\phi_c \land \phi_{c'} \equiv \bot$ for all $c, c' \in \Classes$ with $c\neq c'$.
\end{problem}
    
\section{Solution}
\label{sec: MILP}
In this section, we propose an approach to learning Signal Temporal Logic formulae in the form of decision trees. 
The tree branches from the root to the leaves correspond to the classification process, where leaves are associated with classes and intermediate nodes with decisions in the form of primitive STL formulae.
We infer the structure of the decision tree, the primitives to use for each decision node and their spatial and temporal parameters, and the classes associated with the leaf nodes such that the correct classification rate is maximized and the number of decision nodes is minimized.
We cast the temporal inference problem as a Mixed Integer Linear Programming (MILP) problem using a max flow encoding~\cite{aghaei2021strong}.


Our classification method employs a tree-based structure~\cite{aghaei2021strong}, akin to a binary decision tree but with a unique feature: it incorporates a source node $\mathbf{s}$ and a sink node $\mathbf{t}$, as illustrated in Fig.~\ref{fig: tree}. 
Furthermore, each node is connected to the sink node. 
We refer to this configuration as a \emph{classification tree}, defined as follows.
\begin{definition}[Classification Tree]
    \label{def:DAG}
    A classification tree $\mathcal{T}=(\mathcal{N},\mathcal{L}, \mathcal{E}, \mathbf{s}, \mathbf{t})$ is a Directed Acyclic Graph (DAG) with single source $\mathbf{s}$ and single sink $\mathbf{t}$. 
    Nodes in the final layer of the tree are referred to as leaf nodes, denoted by $\mathcal{L}$, while nodes situated between the source and the leaf nodes are internal nodes, denoted by $\mathcal{N}$ and $\mathcal{E}\subseteq \left(\mathcal{N} \cup \{\mathbf{s}\} \right)\times \left(\mathcal{N} \cup \mathcal{L} \cup \{\mathbf{t}\}\right)$ captures relation between nodes.
    We define the sets $\pa n = \{n' \mid (n', n) \in \mathcal{E}\}$ and $\ch n = \{n' \mid (n, n') \in \mathcal{E}\}$ to represent the parent and children nodes of a given node $n$, respectively. 
\end{definition}
Note that in Fig.~\ref{fig: tree}, $\pa {\mathbf{s}} = \emptyset$, since sink has no parent node and $\pa 1 = \mathbf{s}$\footnote{We abuse notation and write $\pa{n}= n'$ instead of $\pa{n}= \{n'\}$ for readability since a node has at most one parent.}.
On the other hand, $\ch {\mathbf{t}}=\emptyset$ and for all $n\in \mathcal{L}$ we have $\ch n= \{\mathbf{t}\}$.
An internal node $n \in \mathcal{N}$ has three children, a left child, a right child, and the sink $\mathbf{t}$.
In the classification process using $\mathcal{T}$, data samples that are correctly classified traverse from the source node through the tree to reach the sink. 
Conversely, misclassified samples are blocked from progressing beyond the source node.
The binary decision tree (BDT) with STL primitive nodes is extracted from the classification tree $\mathcal{T}$.
The root of the STL BDT is the node connected to the source $\mathbf{s}$, i.e., node $1$ in Fig.~\ref{fig: tree}.
Not all nodes $\mathcal{N} \cup \mathcal{L}$ of $\mathcal{T}$ are part of the inferred STL BDT.
The final structure follows from the optimization result. 
Note that a pre-determined depth value limits the size of the classification tree.


\begin{figure}[hbt!]
    \centering
\includegraphics[width=0.5\linewidth]{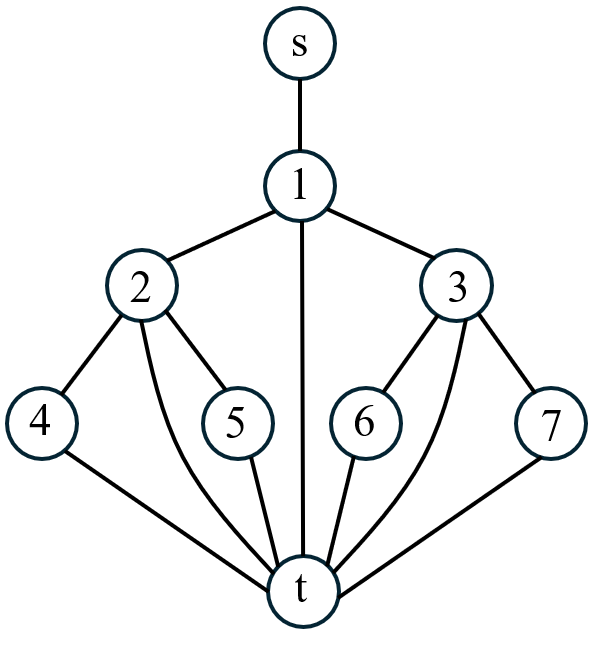}
    \caption{Max flow classification tree.}
    \label{fig: tree}
\end{figure}

%
%

To partition the data at each node $n\in \mathcal{N}$, a finite list of possible splitting rules is considered~\cite{ripley2007pattern}. 
We use simple Parametric Signal Temporal Logic (PSTL) formulae~\cite{asarin2012parametric}, called primitives~\cite{bombara2016decision}, 
defined as follows.
\begin{definition}[PSTL primitives]
\label{def:primitives}
The primitive in the set $\Psi$ consists of a temporal operator, a relational operator, and a predicate function.
Formally, 
$\Psi = \{\Gamma (h(s) \sim \pi) \mid
\Gamma \in \{\lalways_{I_1}, \levent_{I_1}, \lalways_{I_1}\levent_{I_2}, \cdots\},
\sim \in \{\geq , <\}, h \in \mathcal{H}\}$,
where $\pi \in \mathbb{R}$ is the threshold,
$I_j$ are integer ranges of temporal operators,
and $\mathcal{H}$ is a finite set of predicate functions
$h:\range{0}{H} \to \mathbb{M}$.
The set of all possible time intervals of temporal operators is
$\Theta=\{\theta = (I_1,\ldots,I_q) \mid I_p = \range{\underline{I}_p}{\bar{I}_p} \mid \underline{I}_p \leq \bar{I}_p, \; \text{with } \underline{I},\,\bar{I} \in \range{0}{H}, \forall p \in \range{1}{q}, q\in \mathbb{Z}_{\geq 1}\}$,
where $q$ is the number of temporal operators in a primitive.
\end{definition}

In Def.~\ref{def:primitives}, $\pi$ is the \emph{spatial parameter},
while the time bounds of temporal operators $I_p$ are the \emph{time parameters}.

When using a tree $\mathcal{T}$ for classification, we assign a primitive $\psi \in \Psi$ or a class $c \in \Classes$ to each node $n$ in $\mathcal{N}$ and $\mathcal{L}$. Note that a class $c \in \Classes$ can be assigned at internal or leaf nodes, whereas primitives can only be assigned to internal nodes.
For internal nodes $n \in \mathcal{N}$ in addition, we also compute the spatial and time parameters of primitive $\psi$, denoted as $\pi_n$ and $\theta_n = (I_1,\ldots,I_{q_\psi})$, where $q_\psi$ is the number of temporal operators in $\psi$.

For each data sample $s^i$ in the dataset $\Dataset$, the following encoding
captures the classification of $s^i$ as a flow from the source $\mathbf{s}$
to the sink $\mathbf{t}$.
At the starting node $\mathbf{s}$, $s^i$ flows down to the top node 1, see Fig.~\ref{fig: tree}.
If the node $n$ is a \emph{decision node} associate with the STL formula $\phi_n = \psi_n(\pi_n, \theta_{\psi_n})$,
then the check $s^i \models \phi_n$ is performed.
In case the signal $s^i$ satisfies $\phi_n$, the signal flows to the left child
$l(n) \in \ch{n}$. Otherwise, it flows to the right child $r(n) \in \ch{n}$.
The link to the sink node $\mathbf{t}$ is not used for decision nodes.
The procedure continues from the node reached by $s^i$.
If the node $n$ is a \emph{classification node} associated with class $c_n$,
the check $c_n = \ell^i$ is performed.
In case the label $\ell^i$ matches the class $c_n$, then the signal $s^i$ flows
to the sink $\mathbf{t}$ and is correctly classified and counted towards the objective.
If a signal cannot be correctly classified, it does not enter the classification tree and, thus, does not count toward the objective as explained later in Sec.~\ref{subsec: flow}.
In either case, the classification of $s^i$ ends when it reaches the sink $\mathbf{t}$.
For classification nodes, the links to the right and left nodes are unused.
Moreover, leaves $\mathcal{L}$ can only be classification nodes since
they have edges only to the sink $\mathbf{t}$.


Hence, to achieve accurate data classification, we aim to construct a classification tree $\mathcal{T}$ that maximizes the Correct Classification Rate ($CCR(\Phi)$). 
Consequently, the flow-based optimization problem solving Pb.~\ref{pb: p1} is:
\begin{equation}\label{eq: op}
\begin{aligned}
 \max & \quad CCR(\Phi) \\
  \text{s.t. }&\text{Flow constraints},\\
  & \text{Node function constraints},\\
  & \text{STL satisfaction constraints}.\\
\end{aligned}
\end{equation}

The objective is to maximize the Correct Classification Rate (CCR) while adhering to \emph{flow constraints} that ensure the conservation of data flow within the tree.
Constraints related to \emph{node functions} define primitive predicates, child allocation, and classification. 
The final soundness constraint ensures that the predicted class satisfies the signal. 
The following sections provide a detailed description of these constraints.

\subsection{Flow constraints}
\label{subsec: flow}

Here, we establish the conservation of data flow within the classification tree $\mathcal{T}$.
Let $ z^i_n \in \mathbb{B}$ denote a binary variable representing whether data point $(s^i,\ell^i) \in \Dataset$ traverses node $n \in \mathcal{N}\cup \mathcal{L}$. 
It takes a value of one if data point $(s^i,\ell^i)$ flows through node $n$ and zero otherwise.
We designate the entry of a data point into the source node as $z_\mathbf{s}^i \in \mathbb{B}$. 
Since all internal nodes are connected to a common sink node $\mathbf{t}$, we introduce $z_{\mathbf{t},n}^i \in \mathbb{B}$ which also identifies the node through which data point $(s^i,\ell^i)$ enters the sink node $\mathbf{t}$. 
Specifically, $z_{t,n}^i = 1$ indicates that data $(s^i,\ell^i)$ reaches the sink node via node $n$, and zero otherwise.
Then, the flow constraints are defined as follows
\begin{subequations}\label{eq: flow_constr}
\begin{align}
z_\mathbf{s}^i &= z_1^i, \quad \forall i\in\mathcal{I}\label{eq: flow0},\\
z_n^i &= z_{\mathbf{t}, n}^i + z_{l(n)}^i + z_{r(n)}^i, \quad \forall n\in\mathcal{N}, \forall i\in\mathcal{I},\label{eq: flow1}\\
z_n^i &= z_{\mathbf{t},n}^i, \quad \forall n\in\mathcal{L}, \forall i\in\mathcal{I}.\label{eq: flow2}\\
\notag
\end{align}
\end{subequations}%
The constraint~\eqref{eq: flow0} enforces a flow from the source node $\mathbf{s}$
to enter node $n=1$ corresponding to the root of the inferred BDT that eventually reaches sink $\mathbf{t}$ and is correctly classified.
In this case, $z_\mathbf{s}^i = z_1^i =1$.
Otherwise, we have $z_\mathbf{s}^i = z_1^i = 0$, and there is no flow for $s^i$, and it is thus misclassified.
Next, \eqref{eq: flow1} ensures that data $s^i$ leaving internal node $n\in \mathcal{N} \cup \mathcal{L}$ must enter one of its children nodes or the sink node $\mathbf{t}$.
The last equation \eqref{eq: flow2} enforces the flow for $s^i$ from leaves $\mathcal{L}$ to the sink $\mathbf{t}$, which is their only child.
These constraints imply that data is either correctly classified, i.e., $z_{\mathbf{t},n}^i = 1$ for a node $n \in \mathcal{N} \cup \mathcal{L}$ via a deterministic tree flow, or it does not enter the tree, i.e., $z_\mathbf{s}^i = 0$.

\subsection{Node function constraints}

In this section, we capture the functionality of the nodes in $\mathcal{T}$.
Internal nodes $\mathcal{N}$ are either decision or classification nodes.
Each internal node $n \in \mathcal{N}$ in $\mathcal{T}$ either checks a primitive STL formula and creates child nodes or classifies data by directing the flow to the sink.
Leaf nodes $n\in \mathcal{L}$ exclusively make classifications by enforcing the flow to the sink $\mathbf{t}$.
For each node $n\in \mathcal{N}$,
we introduce the binary decision variables $b_{n}^{\psi}\in\mathbb{B}$
to capture whether $n$ is a decision node associated with primitive STL function $\psi \in \Psi$.
Conversely, binary decision variables $ w_n^c \in \mathbb{B}$ indicate that
node $n$ is a classification node for class $c \in \Classes$.


The constraints governing the node functionality are:
\begin{subequations}\label{eq: node_contr_1}
\begin{align}
\sum_{\psi\in\Psi}b^{\psi}_n+ \sum_{c\in\mathcal{C}} w_n^c &= 1, \quad \forall n\in\mathcal{N}, \label{eq: f_internal}\\
\sum_{c\in\mathcal{C}} w_n^c &= 1, \quad \forall n \in \mathcal{L}. \label{eq: f_leaf}\\
\notag
\end{align}
\end{subequations}
For internal nodes $n\in \mathcal{N}$, \eqref{eq: f_internal} guarantees
that $n$ either performs a decision using a primitive $\psi$ or classifies the data into a class $c$.
For leaves $n\in \mathcal{L}$, \eqref{eq: f_leaf} enforces classification, i.e., assignment of a class $c$.

When imposing a decision using a primitive, time intervals and thresholds are required to complete the STL formula. 
The time parameters variable $\xi_{\theta}^{n} \in\mathbb{B}$, $\forall \theta\in \Theta$,
encompasses all possible valuations of the time parameters within the horizon $H$. 
The constraint capturing the primitive variable is 
\begin{equation}\label{eq: time window}
\sum_{\psi\in\Psi}b^{\psi}_n = \sum_{\theta\in\Theta}\xi_{\theta}^{n}, \quad \forall n\in\mathcal{N}.
\end{equation}
If a primitive is selected, i.e., $\sum_{\psi\in\Psi}b^{\psi}_n=1$,
then \eqref{eq: time window} ensures that exactly one set of time parameters $\theta$ is selected for the primitive $\psi$.
Otherwise, no $\theta$ is selected.

For classification nodes responsible, data flows to the sink node $\mathbf{t}$ if the predicted class is correct.
The following constraint enforces this behavior. The inequality constraint is enforced since multiple classification nodes can predict the same class and the data can only flow into one of the nodes.
\begin{equation}\label{eq: node_contr_2}
z_{\mathbf{t},n}^i \leq w_n^c, \quad \forall i: \ell^i =  c \in \mathcal{C}, \, n\in \mathcal{N}\cup\mathcal{L}.
\end{equation}

\subsection{STL satisfaction constraints}
\label{subsec:satisfaction_constr}
In this section, we connect the decision variables for decision nodes with the satisfaction of primitive formulas.
Additionally, the constraints capture the robustness of signals with respect to the STL formula obtained from a PSTL primitive and computed spatial and time parameters.


\subsubsection{Robustness calculation}
To capture the robustness of signals with respect to primitives, we use the following result.

\begin{proposition}
\label{thm:primitive-robustness}
Let $s$ be a signal and $\psi \in \Psi$ be a PSTL primitive.
For any valuations $\theta$ and $\pi$ of the spatial and time parameters of $\psi$, 
we have
\begin{equation*}
    \rho(s, \psi(\pi, \theta)) = \rho(s, \psi(0, \theta)) - \pi
\end{equation*}
\end{proposition}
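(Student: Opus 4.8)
The plan is to prove the claim by structural induction on the PSTL primitive $\psi$, exploiting the specific shape of primitives in Def.~\ref{def:primitives}: a primitive is a single predicate $h(s) \sim \pi$ wrapped in a nested stack of temporal operators $\Gamma$ drawn from $\{\lalways_{I_1}, \levent_{I_1}, \lalways_{I_1}\levent_{I_2}, \ldots\}$, where the spatial parameter $\pi$ appears \emph{only} in the innermost predicate. The key observation is that the threshold $\pi$ enters the robustness computation of Eq.~\eqref{eq:stl-robustness} in exactly one place, namely $\rho(s, h(s) \geq \pi, k) = h(s(k)) - \pi$, and that all the temporal operators above it act via $\min$ and $\max$ over shifted time indices, both of which commute with subtraction of a constant.

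First I would handle the base case: for the predicate $h(s) \sim \pi$ itself (with $\sim$ being $\geq$, the case $<$ following by negation and sign change as noted after Eq.~\eqref{eq:stl-syntax}), we have directly from Eq.~\eqref{eq:stl-robustness} that $\rho(s, h(s)\geq\pi, k) = h(s(k)) - \pi = (h(s(k)) - 0) - \pi = \rho(s, h(s)\geq 0, k) - \pi$, which is the claim at this level. Next, for the inductive step, suppose $\psi(\pi,\theta') = \lalways_{I}\, \psi'(\pi, \theta'')$ where $\psi'$ is a shorter primitive (and $\theta = (I, \theta'')$); by the induction hypothesis $\rho(s, \psi'(\pi,\theta''), k') = \rho(s, \psi'(0,\theta''), k') - \pi$ for every time $k'$. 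Then
\begin{equation*}
\rho(s, \lalways_I\,\psi'(\pi,\theta''), k) = \min_{k' \in k+I} \rho(s, \psi'(\pi,\theta''), k') = \min_{k' \in k+I}\big(\rho(s, \psi'(0,\theta''), k') - \pi\big) = \min_{k' \in k+I}\rho(s, \psi'(0,\theta''), k') - \pi,
\end{equation*}
using the elementary identity $\min_a (f(a) - \pi) = (\min_a f(a)) - \pi$ for a constant $\pi$, and this last expression equals $\rho(s, \lalways_I\,\psi'(0,\theta''), k) - \pi$. The case $\Gamma = \levent_I$ is identical with $\max$ in place of $\min$, using $\max_a(f(a)-\pi) = (\max_a f(a)) - \pi$. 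Evaluating at $k=0$ and writing $\rho(s,\cdot)$ for $\rho(s,\cdot,0)$ gives the stated equality.

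I do not anticipate a serious obstacle here; the result is essentially a bookkeeping lemma. The one point that needs care is making the induction \emph{structure} match the primitive grammar precisely: one must be clear that $\Psi$ only ever places $\pi$ inside the single innermost predicate and never inside a temporal bound or a second predicate, so that the induction on the nesting depth of $\Gamma$ is well-founded and the hypothesis applies at the stripped-down inner formula with the \emph{same} $\pi$. A secondary tidiness issue is the $\sim\,\in\{\geq,<\}$ distinction: for $<$ one rewrites $h(s) < \pi$ as $\lnot(h(s)\geq\pi)$, picks up a sign flip $\rho(s,\lnot\phi,k) = -\rho(s,\phi,k)$, and observes $-(g - \pi) = (-g) - (-\pi)$ does not quite give the clean form — so instead it is cleaner to note $\rho(s, h(s) < \pi, k) = \pi - h(s(k)) = (0 - h(s(k))) + \pi$; to keep the uniform ``$-\pi$'' statement one should define the $<$ primitive's zero-threshold robustness consistently, or simply absorb the sign of $\pi$ into the claim as is implicitly done by the paper's convention. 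Either way the commutation of $\min/\max$ with constant shift is the whole content of the argument.
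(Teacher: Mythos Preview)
Your proposal is correct and is precisely the natural elaboration of what the paper does: the paper's own proof is simply ``follows trivially from the structure of primitive formulae in $\Psi$, and is omitted for brevity,'' and your structural induction on the nesting depth of $\Gamma$, using that $\min$ and $\max$ commute with subtraction of a constant, is exactly the argument being alluded to. You even flag the sign subtlety for the $\sim\,=\,<$ case (where the shift is $+\pi$ rather than $-\pi$), which the paper does not address explicitly but handles implicitly by allowing $\pi_n\in\mathbb{R}$ to absorb the sign in Eq.~\eqref{eq: true_rho}.
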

\begin{proof}
The proof follows trivially from the structure of primitive formulae in $\Phi$,
and is omitted for brevity.
\end{proof}

Using Prop.~\ref{thm:primitive-robustness}, we can precompute the robustness values
for all signals $s^i$, primitives $\psi$ and time parameters $\theta$,
which we denote as $\tau_{\theta}^{i, \psi}$.
For example, the robustness of signal $s^1$ with respect to
$\psi_1 = \levent_{\range{1}{5}} h(s) \geq 0$
is $\tau_{\range{1}{5}}^{1, \psi_1} = \max_{k\in \range{1}{5}} h(s^1(k))$.

The robustness $\rho_\psi^{i, n} \in \mathbb{R}$ of data $s^i$ with respect to the formula $\psi$
with an arbitrary spatial parameter $\pi_n \in\mathbb{R}$ is captured by
\begin{equation}
\label{eq: true_rho}
\sum_{\theta\in\Theta}\xi_{\theta}^{n} \tau_{\theta}^{i, \psi} + \pi_n = \rho^{i, n}_\psi, \quad \forall i\in\mathcal{I}, \,\psi\in\Psi, \, n\in \mathcal{N}.
\end{equation}
We enforce constraint~\eqref{eq: true_rho} for all internal nodes $n\in\mathcal{N}$
to compute the robustness of every signal with respect to the predicted formula at node $n$.

\subsubsection{Soundness constraints}

If a signal satisfies the primitive formula at a decision node $n\in\mathcal{N}$, it is directed to the left child node $l(n)$.
Otherwise, it proceeds to the right child node $r(n)$.

To represent the sign of robustness, we introduce the binary variables $y^{i, n}_{\psi} \in \mathbb{B}$. 
The variable $y^{i, n}_{\psi}$ takes the value of one if $\rho^i_{\phi} \geq 0$, and zero if $\rho^i_{\phi} \leq 0$. 
We employ the \emph{big-M} method to encode this relationship such that the robustness value is not over-constrained when $y^i_{\phi}=0$ as follows
\begin{equation}
\label{eq: rho2binary}
\begin{aligned}
     \rho^{i, n}_{\psi}  &\leq M y^{i, n}_{\psi}, \quad \forall i\in\mathcal{I}, \psi\in\Psi,n\in\mathcal{N},\\
     -\rho^{i, n}_{\psi}  &\leq M (1 - y^{i, n}_{\psi}), \quad \forall i\in\mathcal{I}, \psi\in\Psi,n\in\mathcal{N}.
\end{aligned}
\end{equation}
where $M = \rho_\top \in \mathbb{R}_{>0}$ is the constant value of the maximum robustness.



Next, we need to connect the satisfaction of primitives at a decision node $n$
with the flow to the left or right child.
Hence, for indicating data flowing to the left child node, we introduce variable $z_{l(n)}^i \in \mathbb{B}$ which takes the value of one if two conditions are satisfied simultaneously 
(1) $n$ is a decision node with primitive $\psi$, i.e., $b^{\psi}_n = 1$, and 
(2) the robustness is positive with respect to the formula $y^{i, n}_{\psi} = 1$.
The conditions are enforced by
\begin{equation}\label{eq: min_left}
    z^i_{l(n)}\leq \sum_{\psi\in\Psi} \kappa_{\psi}^{i,n},\quad
     \forall i\in\mathcal{I}, \forall n\in\mathcal{N},
\end{equation}
where $\kappa_{\psi}^{i,n}= y^{i,n}_{\psi}\cdot b^{\psi}_{n}$ is an auxiliary variable that captures the condition
$z_{l(n)}^i=1$ if and only if $y^{i, n}_{\psi}=1$ and $b^{\psi}_{n}=1$. 
Note that since both variables are binary, the product is equivalent to $\kappa_{\psi}^{i,n}= \min\{y^{i,n}_{\psi},b^{\psi}_{n}\}$
and is encoded as mixed integer linear constraints.

Data flowing to the right $z^i_{r(n)}\in\mathbb{B}$ follows a similar process but with opposite conditions as follows:
\begin{equation}\label{eq: min_right}
\begin{aligned}
z^i_{r(n)}&\leq 1-\sum_{\psi\in\Psi}\kappa_{\psi}^{i,n},\quad
     \forall i\in\mathcal{I},  n\in\mathcal{N}.\\
        z^i_{r(n)} &\leq \sum_{\psi\in\Psi}b^{\psi}_{n}.\\
\end{aligned}
\end{equation}

\subsection{MILP formulation}

The $CCR(\Phi)$ of the data set, which is our main objective in the optimization problem, is given by the flow into sink $\mathbf{t}$.
Thus, we obtain the objective by summing up $z_{\mathbf{t},n}^i$ as follows:
\begin{equation}
\label{eq: o_obj}
CCR(\phi)=\frac{\sum_{i\in\mathcal{I}}\sum_{n\in\mathcal{N}\cup\mathcal{L}}z_{\mathbf{t},n}^i}{|\mathcal{I}|}.
\end{equation}
Moreover, the optimal flow tree structure enables specifying a penalty
for the number of decision nodes, thereby reducing the complexity of the resulting formula. 
However, this approach may involve a trade-off in performance, as more decisions can lead to better classification, i.e., larger CCR. 
The trade-off is captured by a variable $\lambda \in [0,1]$ that serves as a regularization parameter.
This parameter allows users to adjust the emphasis between complexity reduction and maintaining high performance. 
Consequently, the objective function in \eqref{eq: o_obj} is modified as follows:
\begin{equation}
\label{eq: n_obj}
    \mathcal{J}=(1-\lambda)\sum_{i\in\mathcal{I}}\sum_{n\mathcal{N}\cup\mathcal{L}}z_{\mathbf{t},n}^i + \lambda \sum_{\psi\in\Psi}b^{\psi}_n.
\end{equation}
The first part of the objective function evaluates the number of correctly classified data, while the second part quantifies the number of decision nodes with $\lambda$ serving as the adjustment variable.

Finally, problem in~\eqref{eq: op} becomes the MILP problem:
\begin{equation}\label{eq: op_pb}
\begin{aligned}
 \text{max }& \mathcal{J}\\
  \text{s.t. }&\eqref{eq: flow_constr} \quad \text{(Flow constraints),}\\
  & \eqref{eq: node_contr_1}-\eqref{eq: node_contr_2} \quad \text{(Node function constraints),}\\
  & \eqref{eq: true_rho}-\eqref{eq: min_right} \quad \text{(STL satisfaction constraints)}.\\
\end{aligned}
\end{equation}

\subsection{Problem simplification}


In this section, we present a simplified version of the problem in~\eqref{eq: op_pb}
when considering first-level STL primitive formulae, i.e., formulae without nested
temporal operators~\cite{bombara2016decision}. 


Under these two conditions, the following primitives are equivalent:
\begin{equation}\label{eq: equiv}
    \begin{aligned}
        \rho(s, \levent_{\theta} h(s) &\geq \pi) = - \rho(s, \lalways_{\theta} h(s) \leq \pi) \\
        \rho(s, \levent_{\theta} h(s) &\leq  \pi) = -\rho(s, \lalways_{\theta} h(s) \geq \pi)
    \end{aligned},
\end{equation}

\begin{proof}
From the definition, we have
    \begin{equation}
        \begin{aligned}
\rho(\levent_{\theta} h(s)\geq \pi) &= \max_{t \in \theta} \left\{h(s(t)) - \pi \right\}\\
&=- (- \max_{t \in\theta} \left\{h(s(t)) - \pi\right\})\\
&=- (\min_{t\in\theta} \left\{\pi - h(s(t)) \right\})\\
&= -\rho(\lalways_{\theta}h(s) \leq \pi)
        \end{aligned}.
    \end{equation}
The proof for the second equivalence follows similarly.
\end{proof}

Inferring a primitive $\lalways_{\theta} h(s) \leq \pi$ from data is the
same as learning $\levent_{\theta} h(s) \geq \pi$.
We remove the primitives containing the always operator and, thus,
reduce $\Psi$ by half.
The simplification can be employed with any set $\Phi$ closed under negation.
For example, second level primitives of the form $\lalways_{\theta_1} \levent_{\theta_2}$ and $\levent_{\theta_1} \lalways_{\theta_2}$.


\subsection{STL formula summary}
Once the optimization in \eqref{eq: op_pb} is completed, we can trace all STL formulae starting from the first layer for each data class. 
Beginning with the first layer, the traversal to a left child from any node signifies the satisfaction of the STL formula associated with that node, whereas progressing to a right child indicates the negation of the STL formula ($\neg$). 
As we proceed through successive layers, the STL formulae encountered along the path are unified using the logical conjunction ($\land$). 
Upon reaching a class prediction node, the aggregated conjunction formula essentially defines the class for that node's prediction. 
Lastly, in cases where a class is represented in multiple nodes across the tree, the distinct STL formulae corresponding to each node are combined using the logical disjunction ($\lor$).

\section{Case Studies}
\label{sec: res}
We demonstrate the effectiveness of our method in three case studies. 
The first is the two-class naval surveillance dataset with comparisons to other binary STL classification methods. 
The second case study demonstrates the application of the proposed approach to multi-class STL inference using a four-class trace dataset. 
Finally, the third case study is a designed dataset 
that uses second-level STL primitives for classification.  
The adjustment variable $\lambda$ is set to 0 to maximize $CCR$.
All computations of the following case studies were performed on a PC with 20 cores at 3.7 GHz with 32 GB of RAM.
We use Gurobi~\cite{gurobi} as the MILP solver. 

\subsection{Naval Surveillance}
In this case study, we use the naval surveillance dataset from \cite{kong2016temporal}. 
The different classes represent two behaviors of the vessel trajectories, as shown in Fig.~\ref{fig:ns}. 
All trajectories start from the open sea on the right side of the figure. 
The blue trajectories are normal behaviors that directly head toward the harbor. 
The green and red trajectories are abnormal. 
The red trajectories approach the island and return to the open sea, while the green trajectories veer to the island and then head to the harbor. 
All trajectories contain 61 time steps of $x$ and $y$ positions.
\begin{figure}[hbt!]
    \centering
    \includegraphics[width=1\linewidth]{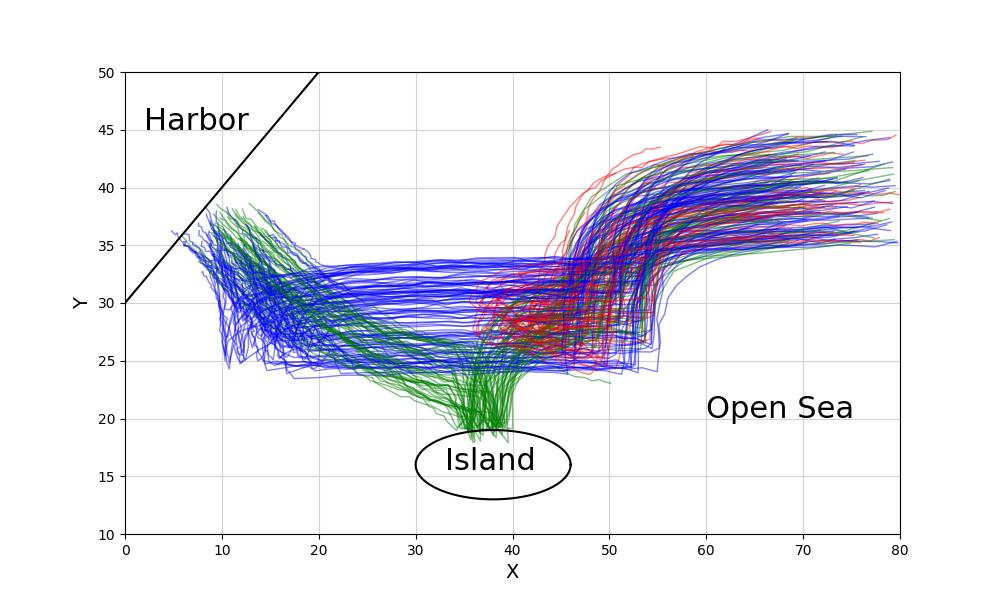}
    \caption{Naval Surveillance Dataset}
    \label{fig:ns}
\end{figure}

\begin{figure}
\centering
\begin{minipage}{.54\linewidth}
  \centering
  \includegraphics[width=.75\linewidth, clip]{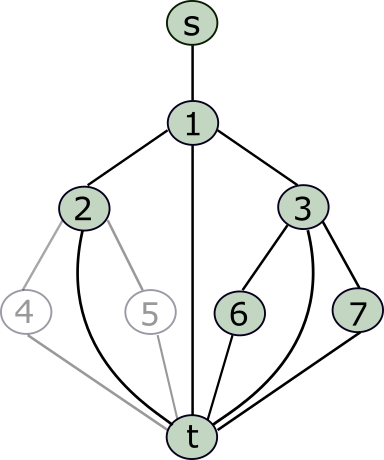}
  \subcaption{}
  \label{subfig:classification_tree}
\end{minipage}%
\begin{minipage}{.45\linewidth}
  \centering
\begin{tikzpicture}[nodes={draw, circle, fill=orange}, -, shorten <=.5pt, thick][scale=.3]
\node{$1:\phi_1$}
            child {node[name=2] {$2:c_n$} 
                    }
            child {node {$3:\phi_3$}
                    child {node[name=6] {$6:c_n$} }
                    child {node[name=7] {$7:c_p$} 
                           }};

\end{tikzpicture}
  \subcaption{}
  \label{subfig:dt}
\end{minipage}
\caption{\ref{subfig:classification_tree} depicts the classification tree of depth 2 wherein only green-colored nodes are used for classification while grey nodes and edges are redundant. Fig.~\ref{subfig:dt} shows the resulting STL BDT wherein $c_p$ denotes ''normal" and $c_n$ indicates ''abnormal" behavior. }
\end{figure}

We use the optimal flow tree with a depth of 2 for this dataset and compare other methods from \cite{aasi2022classification}. 
All results are computed with an average of 10 runs. 

The classification tree and the resulting decision tree are shown in Fig.~\ref{subfig:classification_tree} and Fig.~\ref{subfig:dt}, respectively. 
The colored nodes indicate nodes utilized for classification. 
A typical example of the learned formula from our method for the normal and abnormal trajectories is:
\begin{multline*}
    \begin{aligned}
        &\text{node1: }\phi_1: \levent_{[13,59]} y \leq 23.168 \\
        &\text{node2: abnormal}\\
        &\text{node3: }\phi_3:  \levent_{[50,59]} x \geq 25.483 \\
        &\text{node6: abnormal}\qquad \text{node7: normal}\\
                &\text{STL formula for normal behavior $c_p$: }\neg \phi_1 \land \neg\phi_3 \\
    &\text{STL formula for abnormal behavior $c_n$: }\phi_1\lor \phi_3
    \end{aligned}
\end{multline*}
The interpretation of this formula is clear: the $y$ coordinates of normal vessels do not reach the island between 13 and 59 seconds, and the $x$ coordinates of normal vessels eventually enter the harbor between 50 and 59 seconds, while abnormal trajectories are the opposite. 

We compare the results of \cite{aasi2022classification} using boosted concise decision trees (BCDTs) with respect to the average runtime and the classification accuracy in Fig.~\ref{fig: diff pare} by varying the number of samples. 
In the BCDTs approach, the variable $k$ is the different number of decision trees used for classification, and $d$ is the depth of the tree. 
The results demonstrate that our approach consistently outperforms the BCDTs approach in obtaining optimal solutions.

\begin{figure}
    \centering
    \subfloat[\label{subfig: runtime}]{%
    \includegraphics[width=0.8\linewidth]{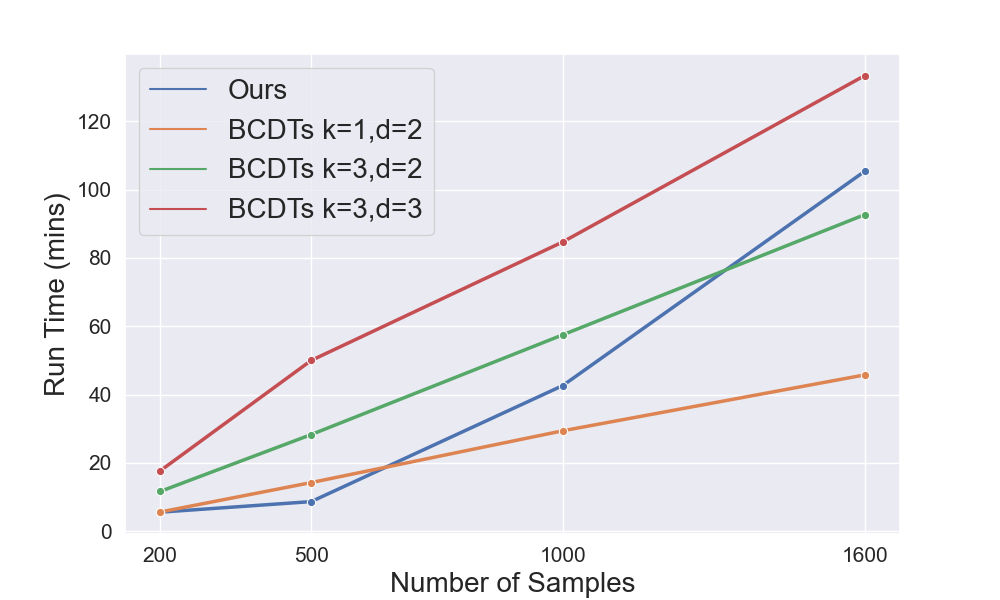}}
    \vspace{-0.01in}
    \centering
    \subfloat[\label{subfig: ccr}]{%
       \includegraphics[width=0.8\linewidth]{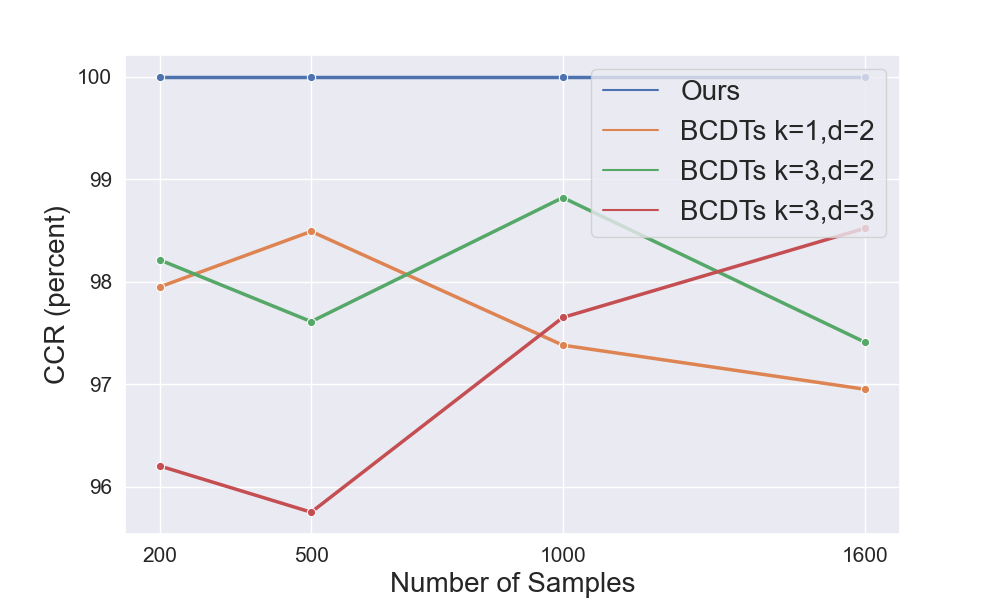}
       }
    \caption{Performance comparison (a) Run time (b) Correct classification rate.}
    \label{fig: diff pare}
\end{figure}

\subsection{Trace Dataset}
The trace dataset contains four classes of one-dimensional transient behavior from a simulated nuclear industry process~\cite{dau2019ucr}. 
The length of the data is 275, a total of 200 samples in the dataset, evenly distributed by classes as shown in Fig.~\ref{fig:trace}. 
\begin{figure}
    \centering
    \includegraphics[width=1\linewidth]{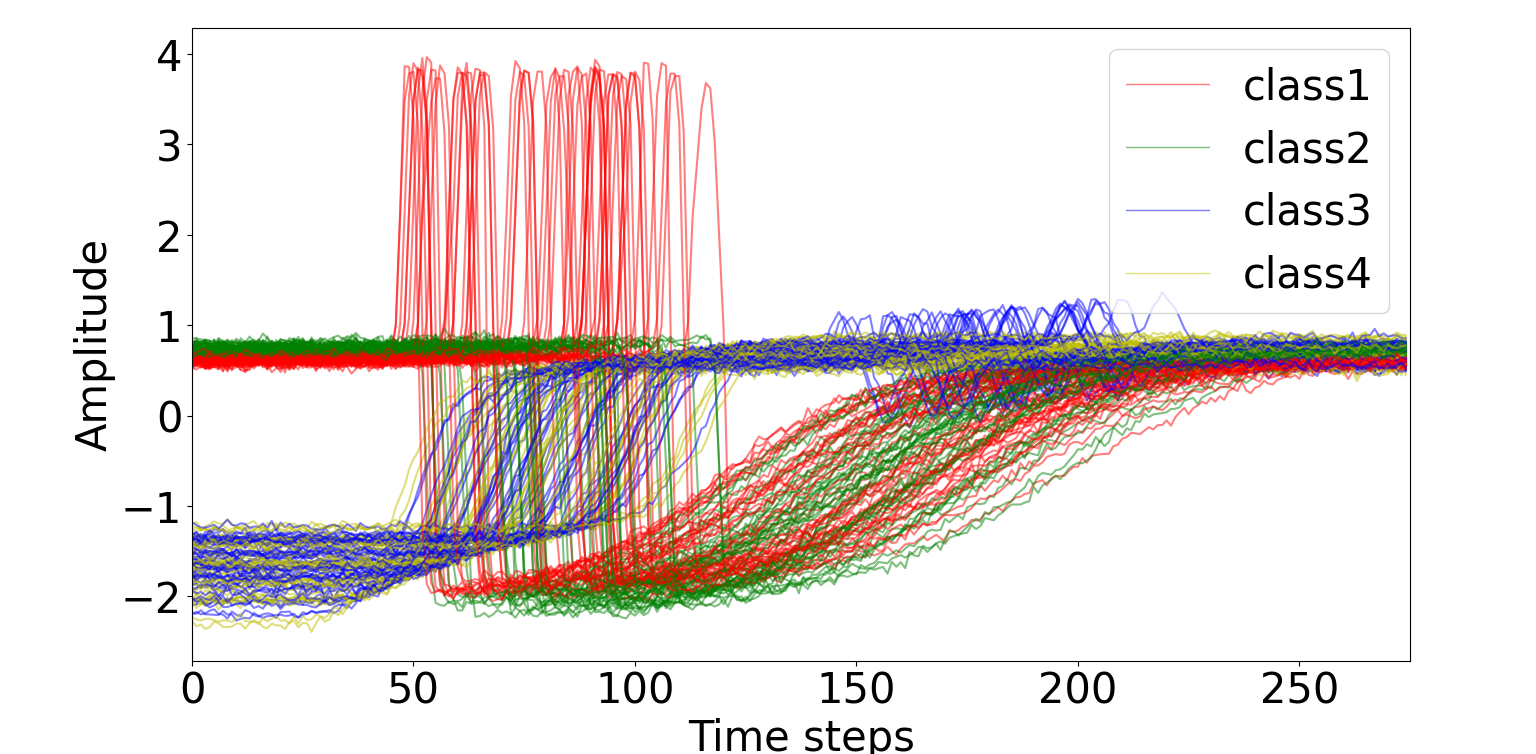}
    \caption{Trace Dataset}
    \label{fig:trace}
\end{figure}
We used a tree with a depth of 3 for this dataset, and the objective achieves 100 percent accuracy and the resulting formulae for the four classes are computed as follows:
\begin{multline*}
\begin{aligned}
&\text{node1:} \phi_1: \levent_{[173,181]} s\leq 0.74 \quad \text{node2:} \phi_2: \levent_{[12,16]} s\geq 0\\
&\text{node3:} \phi_3: \levent_{[115,167]}s\leq 0.59\quad\text{node4:} \phi_4: \lalways_{[21,27]}s<0.72\\
&\text{node5:} \phi_5: \levent_{[157,216]}s\leq 0.35\quad\text{node6:} \phi_6: \levent_{[230,234]}s\geq0.88\\
&\text{node7:} \phi_7:  \lalways_{[210,210]}s<0.69\quad\text{node8: class 2} \\
&\text{node 9: class 1} \quad \text{node 10 \&13 \&15: class 3}\\
&\text{node 11 \&12 \&14: class 4}
\end{aligned}
\end{multline*}
\begin{multline*}
\begin{aligned}
    \text{class1: }&\phi_1\land \phi_2 \land \phi_4\\
    \text{class2: }&\phi_1\land\phi_2 \land \neg \phi_4\\
    \text{class3: }&
        (\phi_1\land \neg\phi_2 \land\phi_5)
        \lor (\neg\phi_1\land \phi_3 \land\neg\phi_6)\\
        \lor& (\neg\phi_1\land\neg\phi_3\land \phi_7)\\
    \text{class4: }&
        (\phi_1\land \neg\phi_2 \land\neg\phi_5)
        \lor (\neg\phi_1\land \phi_3 \land\phi_6)\\ \lor& (\neg\phi_1\land\neg\phi_3\land \neg\phi_7)
\end{aligned}
\end{multline*}
The results demonstrate the capability for multi-class classification.


\subsection{Classification using Second-level STL Primitives}
Our framework accommodates complex STL formulae. Fig.~\ref{fig:l2} is an illustrative example of sample trajectories designed to highlight the expressivity of second-level STL formula.
Current neural network-based approaches are limited to first-level formulae, and adapting them to other primitive sets is non-trivial~\cite{li2023learning}.

Class 1 of the blue trajectory is a triangle wave, and we complement the data with variations of plateaued triangular waves, such as the yellow and green trajectories with plateaus of 2 and 3, respectively. 
Additionally, to ensure that the inferred formula is not trivially defined by one tree level, we have trajectories of constant values at the lower and upper bounds drawn in purple.
Next, we sample different trajectories from each type of wave by shifting the initiation point.
We generate 300 samples with 15 time steps, 100 of which are class 1, and we denote the rest of the data as class 2. 
Lastly, we add a small random noise between -0.01 to 0.01 to all values.

We use a tree with a depth of 2 for classification. The optimization returns 100 percent accuracy, and the results are the following:
\begin{multline*}
    \begin{aligned}
        &\text{node 1: }\phi_1: \lalways_{[2, 11]}\levent_{[0,4]} s \leq 1.0086 \\
        &\text{node 2: class 2}\\
        &\text{node 3: }\phi_3: \lalways_{[0, 10]}\levent_{[0,2]}s \leq 3.017 \\
        &\text{node 6: class 1}\qquad \text{node 7: class 2}\\
    \end{aligned}
\end{multline*}




$\phi_1$ means within the time interval $[2,11]$, there is always a time within any subinterval of length 4 where the signal value is less than or equal to 1.0086. 
Therefore, only the lower constant trajectory satisfies $\phi_1$. 
Next, $\phi_3$ means that within the time interval $[0, 10]$, there is always an event within any subinterval of length 2 where the signal value is less than or equal to 3.017. 
From the remaining data that does not satisfy $\phi_1$, only the class 1 triangular wave satisfies $\phi_3$. As a result, the STL formula for class 1 is $\neg \phi_1 \land \phi_3$.

\begin{figure}
    \centering
    \includegraphics[width=1\linewidth]{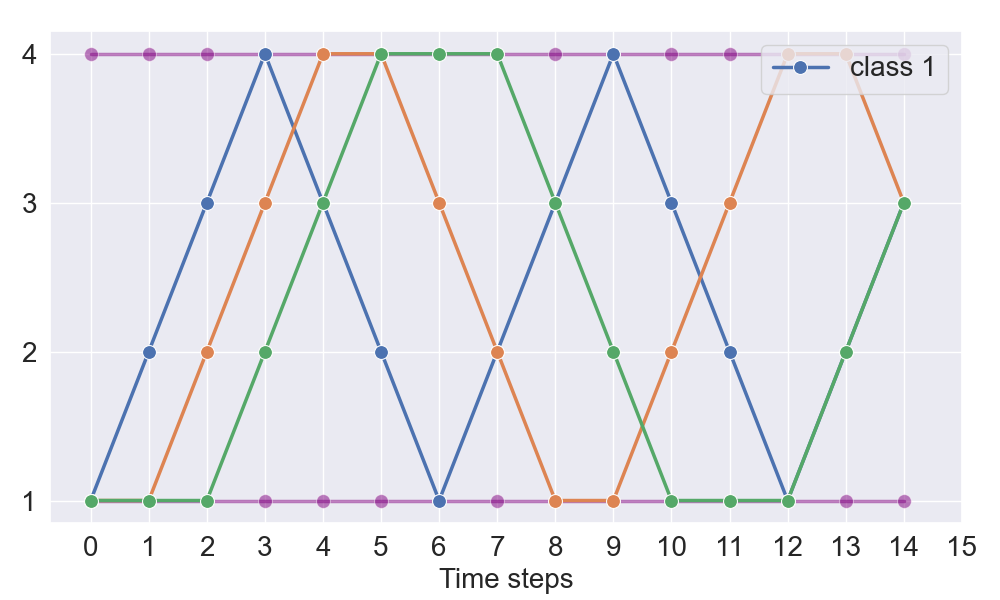}
    \caption{Trajectories for high-level STL}
    \label{fig:l2}
\end{figure}

\section{Conclusions}
In this paper, we propose a decision tree-based STL inference algorithm that works for both two-class and multi-class classification problems. 
Our algorithm formulates the problem as a MILP by leveraging a max-flow approach over a synthesized tree using STL primitives, which allows for the computation of a globally optimal solution, i.e., a high Correct Classification Rate (CCR).
Additionally, we exploit the symmetry of STL primitives to reduce the required constraints for classification, resulting in improved performance and reduced complexity.
We evaluate the performance through three case studies to demonstrate the ability to interpret complex STL formulas and the precision in predictive accuracy.
We show the capacity of our approach to handling two-class and multi-class classification problems with first-level STL primitives and a case for second-level STL primitives.


\bibliographystyle{ieeetr}
\bibliography{references.bib}
\end{document}